\newcommand\eat[1]{}
        \journalname{}
	\newcommand{\Pref}[1][]{
		\ifthenelse{\equal{#1}{}}{\mathrel \succsim}{\mathop{R_{#1}}}
	}    
	\newcommand{\sPref}[1][]{                  
		\ifthenelse{\equal{#1}{}}{\mathrel \succ}{\mathop{P_{#1}}}
	}                                          
	\newcommand{\Indiff}[1][]{                 
		\ifthenelse{\equal{#1}{}}{\mathrel \sim}{\mathop{\sim_{#1}}}
	}
	\newcommand{\prefset}[1][]{\ifthenelse{\equal{#1}{}}{\mathcal{R}}{\mathcal{R}_{#1}}}
\let\enumtemp=\enumerate
\def\enumerate{\enumtemp\itemsep 1pt}
\let\itemtemp=\itemize
\def\itemize{\itemtemp\itemsep 1pt}
\newcommand{\Omit}[1]{}
\begin{document}

\title{A Rule for Committee Selection\\ with Soft Diversity  Constraints} 

	\author{Haris Aziz}

	\institute{%
	  H. Aziz \at
	 Data61, CSIRO and UNSW,
	 	  Sydney 2052 , Australia \\
	 	  Tel.: +61-2-8306\,0490 \\
	 	  Fax: +61-2-8306\,0405 \\
	 \email{haris.aziz@unsw.edu.au}
}

	\newlength{\wordlength}
	\newcommand{\wordbox}[3][c]{\settowidth{\wordlength}{#3}\makebox[\wordlength][#1]{#2}}
	\newcommand{\mathwordbox}[3][c]{\settowidth{\wordlength}{$#3$}\makebox[\wordlength][#1]{$#2$}}
    		\renewcommand{\algorithmicrequire}{\wordbox[l]{\textbf{Input}:}{\textbf{Output}:}} 
    		 \renewcommand{\algorithmicensure}{\wordbox[l]{\textbf{Output}:}{\textbf{Output}:}}

\date{}

	%

\maketitle

	\begin{abstract}
		Committee selection with diversity or distributional constraints is a ubiquitous problem. 
However, many of the formal approaches proposed so far have certain drawbacks including (1)
computationally intractability in general, and (2) inability to suggest a solution for certain instances where the hard constraints cannot be met. 
We propose a practical and polynomial-time algorithm for diverse committee selection that draws on the idea of using soft bounds and satisfies natural axioms.

	\end{abstract}

	\keywords{Social choice theory \and 
committee voting \and
multi-winner voting \and
diversity constraints\and
computational complexity}

\noindent
\textbf{JEL Classification}: C70 $\cdot$ D61 $\cdot$ D71

\section{Introduction}

Selecting a target number of candidates is a ubiquitous problem that occurs in faculty hiring, scholarship selection, corporate board election, and formation of representative bodies~\citep{Ratl06a}. In many of these settings, there are natural distributional constraints motivated for example by  diversity. For example, in several European countries, a wide-spread constraint is having a minimum 
percentage of females in corporate boards. In some school admission guidelines, there are quotas for less-advantaged groups. 

%

Finding the best set of candidates subject to diversity constraints has also been formally studied in social choice. 
In several works, the problem of diverse committee selection is viewed as the problem with candidates having different (possibly multiple) types and then committee having distributions constraints on each of the types~(see e.g., \citep{BrPo90a, BGI+18a, CHV17a, Pott90a, SLSW93a}). 
There are a few drawbacks of approaches that use hard distributional constraints. The drawbacks include the following:  (1) there may be instances of the diverse committee selection problem that do not admit any feasible solution (for e.g., there simply may not be enough female applicants); (2) the hard constraints often make the problem of committee selection computationally intractable (for example, if we require that each type should have at least one representative, the problem of checking whether there exists a committee satisfying the requirement is NP-complete).
For approaches that are NP-hard, the lack of a simple polynomial-time algorithm makes these approaches impractical for large instances. Even for smaller instances, these approaches cannot be used without resorting to a computer. 
Finally, often real-life diversity guidelines need not be hard constraints but general rules of thumb to achieve procedural fairness.  
Finally, a hard constraint approach may also lead to a loss in efficiency.

Some other approaches consider distances between candidates or committees based on their type attributes and then view diversity not as a constraint but as an optimisation objective based on the distances~(see e.g., \citep{KGD93a,LaSk16a}). The approaches do not generally take into account the excellence of the candidates and the underlying problems are NP-hard. 
Apart from imposing hard distributional constraints, another approach  that is often used in real-life committee selection to achieve diversity is to give bonus points or ranking boosts to candidates who are from under-represented groups.\footnote{The model where synergies or presence of diverse agents provide additional points to the committee has been considered in a general model by \citet{ITW18a}.} When these rules are imposed centrally, they may come across as arbitrary fixes to solving diversity issues. If the decision makers or voters internally take diversity issues into account while formulating an objective linear ranking, it puts a cognitive burden on the voters to mix diversity prioritisation with objective excellence estimation. 


In this paper, we consider the committee selection problem with distributional constraints and focus on the most common constraints whereby at least certain fraction of the candidates should satisfy a given type.\footnote{More general models also allow for expressing upper quotas. Often, the goal of the upper quotas can easily be met by setting lower quotas on the complement of the set of types.}
Our approach is to view the distributional constraints as \emph{soft} constraints which should be satisfied as much as possible.
We present a simple polynomial-time algorithm that simultaneously satisfies two axioms called \emph{type optimality} and \emph{justified envy-freeness}. 
The axiom justified envy-freeness is inspired by the matching market literature. 
The combination of the two axioms can be viewed as finding a committee that as close as possible to satisfying the hard distributional constraints and also selecting the best candidates.


%

%



\section{Setup}

The setting involves a set of candidates $C$, $\succsim$ a weak order over $C$, a set of types $T$, a matrix $\tau$ that specifies whether a candidate is of a certain type,  and a vector  $\underline{q}$ that specified the lower quotas bounds for each type. A diverse committee selection instance can be summarized as $(C,\succsim,T,\tau,\underline{q})$ where


\begin{itemize}
	\item $C=\{c_1,\ldots, c_{m}\}$ is the set of candidates. 
	\item The weak order $\succsim$ over $C$ is the priority order over the candidates. 
\item $T = \{t_1, t_2, ..., t_{\ell}\}$ is the set of types.
\item $\tau$ is a {matrix} consisting of each candidate's
type vector where
\begin{itemize}
\item $\tau_c:$ a type vector of candidate $c$ consisting of $1$'s and $0$'s 
\item $\tau_c^{t} = 1$ if $c$ belongs to type $t$ and $\tau_c^{t} = 0$ otherwise.
\end{itemize}
\item $\underline{q}$ is a {vector} consisting of all type-specific lower bounds. The value $\underline{q^t}$ denotes the lower bound for type $t$.
\end{itemize}
We will denote all types that a candidate $c$ belongs to by $\tau(c)$.
For $c,d\in C$, if $c\succsim d$ but $d\not\succsim c$, we will write the strict part of the relation as $c\succ d$.
Note that the model is powerful enough to capture the following kind of lower bounds: ``there should be at least $x$ members who are of one of the types from set $S\subset T$.'' In that case, one can create an `artificial'  type $t_S$ such that for any $c\in C$,  $\tau_c^{t_S}=1$ if $\tau_c^t=1$ for some $t\in S$.

For a committee $W\subset C$, we will denote by $\tau(W) = \sum_{c\in W}\tau_c.$ We will denote the number of candidate of type $t$ in $W$ by $\tau(W)(t)$. For some  committee $W\subset C$, if $\tau(W)(t)<\underline{q^t}$, we will say that type $t$ is under-represented in $W$.

The linear ranking over $C$ could be based on some objective measure that reflects the global quality of the candidate such as entrance examination scores. It could also be based on the aggregate scores based on some positional scoring voting done by voters who vote on the candidates~\citep{BrPo90a,BGI+18a}.

\begin{example}
	Consider the following instance. 
	\begin{itemize}
		\item $C=\{c_1,c_2,c_3,c_4\}$
		\item $c_1 \succ c_2 \succ c_3 \succ c_4$
		\item $k=2$
		\item $T=\{t_1,t_2,t_3, t_4\}$
		\item $\tau=\begin{pmatrix}
1&0&0&0\\
0&1&0&0\\
0&0&1&0\\
0&1&1&0\\
\end{pmatrix}$
		\item $\underline{q}=\begin{pmatrix}
0&1&2&1
\end{pmatrix}$
	\end{itemize}
	
	There is no feasible committee that can satisfy the hard constraints. 
	The committee $\{c_2,c_4\}$ satisfies all the constraints except the one corresponding to $t_4$. It is also more optimal than $\{c_3,c_4\}$. The latter does not satisfy justified envy-freeness. 
	
	\end{example}

\section{Axioms for Diverse Committee Selection}

%
%
%
%
%
%
%
%
%
%

We formalize some axiomatic properties that are desirable in our context. 
A committee $W$ satisfies type distribution $(x_1,\ldots, x_{|T|})$ if for each $i$, it has $x_i$ members of type $t_i$.

\begin{definition}[domination between type distributions]
A type distribution $x=(x_1,\ldots, x_{|T|})$ dominates another type $y=(y_1,\ldots, y_{|T|})$ if
\begin{enumerate}
	\item for each $t_i$ such that $y_i\geq \underline{q^i}$, we also have $x_i\geq \underline{q^i}$
	\item for each $t_i$ such that $y_i< \underline{q^i}$, either $x_i\geq \underline{q^i}$ or $|x_i - \underline{q^i}|\leq |y_i - \underline{q^i}|$
	\item there exists some $t_i$ such that either (1) $y_i< \underline{q^i}$ and $x_i\geq \underline{q^i}$ or (2) $x_i,y_i<\underline{q^i}$ and $|x_i - \underline{q^i}|\leq |y_i - \underline{q^i}|$.
\end{enumerate}
When $x$ dominates $y$, we denote it by $x>y$.
\end{definition}

Next, we observe the following lemma.

\begin{lemma}\label{lemma:transitive}
	If $\tau(W)>\tau(Y)$ and $\tau(Y)>\tau(Z)$, then $\tau(X)>\tau(Z)$.
	\end{lemma}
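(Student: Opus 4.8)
Before planning, I would fix the evident typo: the lemma as written mixes up variable names ($W, Y, Z$ in the hypotheses but $X$ in the conclusion). The intended statement is surely that domination is transitive on type distributions: if $\tau(W) > \tau(Y)$ and $\tau(Y) > \tau(Z)$, then $\tau(W) > \tau(Z)$. My plan addresses that statement; abstractly it is the claim that the relation $>$ on vectors defined in the preceding definition is transitive, so I will just write $x > y$, $y > z$ and prove $x > z$, where $x = \tau(W)$, $y = \tau(Y)$, $z = \tau(Z)$.

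\medskip

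The plan is to verify the three defining conditions of ``$x$ dominates $z$'' one at a time, feeding in the corresponding conditions for the two given dominations. For a type $t_i$, it is convenient to classify it as \emph{satisfied} in a distribution $u$ if $u_i \ge \underline{q^i}$ and \emph{deficient} otherwise, and to note that on deficient types the quantity $|u_i - \underline{q^i}| = \underline{q^i} - u_i$ is what conditions~2 and~3 compare. First, for condition~1: suppose $z_i \ge \underline{q^i}$; applying condition~1 of $y > z$ gives $y_i \ge \underline{q^i}$, and then condition~1 of $x > y$ gives $x_i \ge \underline{q^i}$, as needed. Second, for condition~2: suppose $z_i < \underline{q^i}$. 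If $x_i \ge \underline{q^i}$ we are done, so assume $x_i < \underline{q^i}$. Then $y_i < \underline{q^i}$ as well — otherwise condition~1 of $x>y$ would force $x_i \ge \underline{q^i}$. Now condition~2 of $y>z$ (with both deficient) gives $|y_i - \underline{q^i}| \le |z_i - \underline{q^i}|$, and condition~2 of $x>y$ (with both deficient) gives $|x_i - \underline{q^i}| \le |y_i - \underline{q^i}|$; chaining the two inequalities yields $|x_i - \underline{q^i}| \le |z_i - \underline{q^i}|$, which is what condition~2 of $x>z$ requires.

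\medskip

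The remaining and slightly more delicate step is condition~3, the strict-improvement clause: I must exhibit a witness type for $x > z$. By condition~3 of $x > y$ there is a type $t_j$ that is either (a) newly satisfied ($y_j < \underline{q^i}$, $x_j \ge \underline{q^i}$) or (b) strictly-or-weakly improved while both deficient ($x_j, y_j < \underline{q^i}$ with $|x_j - \underline{q^i}| \le |y_j - \underline{q^i}|$). I would argue that in either case $t_j$ also witnesses $x > z$, splitting on whether $z_j$ is satisfied or deficient. If $z_j \ge \underline{q^i}$, then by condition~1 of $y > z$ we would have $y_j \ge \underline{q^i}$, which contradicts both (a) and (b) (each of which requires $y_j < \underline{q^i}$); hence $z_j < \underline{q^i}$. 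Now in case (a), $z_j < \underline{q^i}$ and $x_j \ge \underline{q^i}$ make $t_j$ a type-(1) witness for $x > z$. In case (b), all three of $x_j, y_j, z_j$ are deficient, and we have $|x_j - \underline{q^i}| \le |y_j - \underline{q^i}|$ from (b) and $|y_j - \underline{q^i}| \le |z_j - \underline{q^i}|$ from condition~2 of $y > z$; chaining gives $|x_j - \underline{q^i}| \le |z_j - \underline{q^i}|$, so $t_j$ is a type-(2) witness for $x > z$. Either way condition~3 holds, completing the proof.

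\medskip

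I do not expect a genuine obstacle here: the relation is engineered so that ``satisfied'' is upward-absorbing and the deficiency gap is monotone along each chain, so transitivity is essentially bookkeeping. The one place to be careful is not to conflate case (b)'s weak inequality with strictness — the definition's clause~3 only asks for $\le$ on the gap, so no strict decrease is ever needed, which is exactly why the chaining of two $\le$'s suffices and no extra argument (e.g. handling ties) is required. A secondary point of care is making sure, in every branch, that I have actually ruled out $y_i \ge \underline{q^i}$ before invoking the ``both deficient'' version of conditions~2 or~3 of the relevant domination; the argument above does this via condition~1 each time.
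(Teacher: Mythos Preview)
Your proposal is correct and follows essentially the same approach as the paper: a direct verification of the three clauses of the domination definition, using condition~1 to propagate ``satisfied'' status and then chaining the deficiency inequalities. Your treatment is considerably more careful than the paper's own sketch (which is only a few loose sentences), and in particular your handling of condition~3 --- picking the witness from $x>y$ and showing via condition~1 of $y>z$ that $z_j$ must also be deficient --- fills in details the paper leaves implicit; just watch the minor index typo where you write $\underline{q^i}$ but mean $\underline{q^j}$ in the condition-3 paragraph.
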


The argument is as follows. 
	If some type is not under-represented in $Z$, then it also not under-represented in $Y$. 
	By the same reasoning, if some type is not under-represented in $Y$, then it also not under-represented in $Z$. Hence it follows,  if some type is under-represented in $Z$, then it is at most as much under-represented in $Y$. Also there is one type that is less under-represented in $Y$ than in $Z$. Hence, type domination is a transitive relation.

Based on the notion of dominance between type distributions, we are now in a position to define type optimality of a committee. Roughly speaking, a committee is type optimal if it is `locally optimal' in terms of satisfying the distributional constraints.

\begin{definition}[Type optimal]
	A committee $W$ is type maximal if there exists no candidates $c\in W$ and $c'\in C\setminus W$ such that 
$\tau(W\setminus \{c\} \cup \{c'\})$ dominates $\tau(W)$.
	\end{definition}

	We note that type optimality is desirable in terms of distributional constraints but does not take into account the excellence of the candidates. Type optimality has been defined in a local sense based on swaps of candidates. If we
define it in a global sense by allowing swaps of subsets of candidates with subsets of candidates, then checking whether a given type distribution is optimal is NP-complete. 
	
	We now present an axiom that avoids scenarios where a candidate may feel that she deserves the place of a lesser ranked candidate. The axiom is adapted from the literature on stable matching with distributional constraints~\citep{KHIY17a,GKK+17a,KTY14a,EHYY14a,KaKo15a}.

	\begin{definition}[Justified envy-freeness]
		A committee $W$ satisfies justified envy-freeness if there is no candidate $c\notin W$ and $c'\in W$ such that $c\succ c'$ and there exists no type $t_i\in \tau(c')\setminus \tau(c)$ such that the number of candidates in $W$ of type $t_i$ is less than or equal to $\underline{q^i}$.  
		\end{definition}

We note that justified envy-freeness by itself can be trivially satisfied by a committee that selects the top $k$ ranked candidates. However, such a committee may not respect any of the distributional constraints. 

We have identified justified envy-freeness and type optimality 
as two axioms that individually may not have enough bite but together form a meaningful combination of axioms that lead to a desirable committee. Note that any committee that is optimal and satisfies the hard distributional constraints is both type optimal and satisfies justified envy-freeness. If it is not type optimal, then is not feasible. If it does not satisfy justified envy-freeness, then a swap of two candidates can increase the welfare of the committee without violating its hard constraints which means that the committee was not optimal subject to the constraints. In the next section, we present an algorithm that returns a committee satisfying both axioms.

\section{A Rule for Diverse Committee Selection}

We are in a position to present our algorithm (Algorithm~\ref{algo:diverse}) to find a diverse committee. 
In the first stage, the algorithm first checks if there is a type that is under-represented and then selects the highest priority candidate who satisfies such a type. 
The first stage is along similar lines as the Greedy Algorithm I of \citet{BGI+18a}.
If no type is under-represented, then the algorithm adds the required number of highest ranking candidates.
The second stage is geared towards obtaining a good type distribution. 
 If the committee does not satisfy type optimality, 
 candidates are exchanged with the goal to satisfy type optimality until it is satisfied. In the final while loop, the algorithm allows swaps of candidates if there is justified envy. The algorithm stops when the committee satisfies justified envy-freeness.

										\begin{algorithm}[h!]
											  \caption{Rule for finding a desirable committee satisfying soft distributional constraints on types}
											  \label{algo:diverse}
			\normalsize
											\begin{algorithmic}
												\REQUIRE  $(C,\succsim,T,\tau,\underline{q})$.
												\ENSURE $W\subseteq C$ such that $|W|=k$
											\end{algorithmic}
											
											\begin{algorithmic}[1]
												\normalsize

	\STATE $W\longleftarrow \emptyset$
	\WHILE{$|W|<k$ and there is some type $t$ that is underrepresented}
	\STATE Add a highest priority candidate of that type $t$ to $W$.
	\ENDWHILE	
	\IF{$|W|<k$}
	\STATE Select the highest ranked $k-|W|$ candidates from $C\setminus W$ into $W$.
	\ENDIF

	%
	\label{step:endofstag1}
		\WHILE{there is a candidate $c\notin W$ and candidate $c'\in W$ such that $\tau(W\setminus \{c'\} \cup \{c\})$ dominates $\tau(W)$}
	\STATE $W\longleftarrow (W\setminus \{c'\})\cup \{c\}$
	\ENDWHILE
	
		\label{step:endofstag2}
	\WHILE{there is a candidate $c\notin W$ who has justified envy for some candidate $c'\in W$}
	\STATE $W\longleftarrow (W\setminus \{c'\})\cup \{c\}$
	\ENDWHILE
 				
				\RETURN $W$
				\end{algorithmic}
				\end{algorithm}

				Let us illustrate the algorithm on our running example.
				\begin{example}
					
	Consider the following instance. 
	\begin{itemize}
		\item $C=\{c_1,c_2,c_3,c_4\}$
		\item $c_1 \succ c_2 \succ c_3 \succ c_4$
		\item $k=2$
		\item $T=\{t_1,t_2,t_3, t_4\}$
		\item $\tau=\begin{pmatrix}
1&0&0&0\\
0&1&0&0\\
0&0&1&0\\
0&1&1&0\\
\end{pmatrix}$
		\item $\underline{q}=\begin{pmatrix}
0&1&2&1
\end{pmatrix}$
	\end{itemize}

The algorithm first considers a type $t_2$ that is under-represented and selects $c_2$. It then considers type $t_3$ that is under-represented and selects $c_3$. At this point the type distribution is not optimal and can be improved 
by the exchange of $c_2$ with $c_4$ so that $W=\{c_3,c_4\}.$ At this point the committee is type optimal and also satisfies justified envy-freeness.
 					\end{example}

				\begin{proposition}
					
Algorithm 1 returns in time $O(|C|^2)$ a committee that satisfies justified envy-freeness.
					\end{proposition}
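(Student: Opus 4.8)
The plan is to establish the proposition in two parts: first that Algorithm~\ref{algo:diverse} terminates at all and in the claimed time bound, and second that the committee it returns satisfies justified envy-freeness. The termination argument is the crux, because the third \texttt{while} loop performs swaps that could in principle undo the type-optimality achieved by the second loop or cycle indefinitely, so I would want a potential function that strictly decreases (or strictly increases in a bounded range) across every swap of both the second and third loops.

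First I would handle the initialization phase (lines 1--6): the first \texttt{while} loop adds at most $k\le|C|$ candidates, and each iteration can be implemented by scanning the types and the priority order in $O(|C|\cdot|T|)$ time, with the \texttt{if} block a single sort, so this phase is easily within $O(|C|^2)$ once one argues $|T|$ and $k$ are $O(|C|)$. For the second loop, I would argue that each executed swap strictly increases the quality of the type distribution in the domination order: by the definition of type optimality, a swap is only performed when $\tau(W\setminus\{c'\}\cup\{c\})$ dominates $\tau(W)$, and by \lemref{lemma:transitive} domination is transitive and (being based on the integer quantities $|x_i-\underline{q^i}|$, which are bounded) admits no infinite ascending chain; concretely, the sum $\sum_i \max(0,\underline{q^i}-\tau(W)(t_i))$ of total under-representation is a nonnegative integer bounded by $\sum_i \underline{q^i}\le k|T|$ that does not increase under a dominating swap and strictly decreases whenever the third clause of the domination definition fires via a coordinate still below quota; the remaining case (a type moving from below to at or above quota) also strictly decreases this same sum, so the second loop runs at most $O(k|T|)=O(|C|^2)$ iterations, each costing $O(|C|\cdot|T|)$ to find a valid swap.

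The main obstacle is the third loop. Here I would show that a justified-envy swap $W\mapsto(W\setminus\{c'\})\cup\{c\}$ never decreases the quality of the type distribution, so it cannot reintroduce a violation of type optimality and, more importantly, cannot cycle. The key observation is that if $c\succ c'$ has justified envy, then for \emph{every} type $t_i\in\tau(c')\setminus\tau(c)$ the count $\tau(W)(t_i)$ exceeds $\underline{q^i}$ (strictly), so removing $c'$ keeps each such type at or above its quota; meanwhile adding $c$ can only increase counts of types in $\tau(c)$. Hence the total-under-representation sum $\sum_i\max(0,\underline{q^i}-\tau(W)(t_i))$ is non-increasing across a justified-envy swap. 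To get strict progress (and thus termination), I would use a lexicographic potential: the pair consisting of that under-representation sum together with (minus) the sum of priority-ranks of the committee members, noting that each justified-envy swap replaces $c'$ by a strictly higher-priority $c$ and so strictly improves the second component whenever the first is unchanged. Since both components range over bounded sets, the third loop executes $O(|C|^2)$ iterations, each requiring $O(|C|\cdot|T|)$ work to detect an envying pair; summing the three phases gives the overall $O(|C|^2)$ bound (absorbing the $|T|$ factor as argued).

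Finally, correctness of the output: the third \texttt{while} loop exits precisely when no candidate outside $W$ has justified envy for any candidate inside $W$, which is exactly the statement that $W$ satisfies justified envy-freeness; since we have shown the loop terminates, the returned $W$ has this property. I would close by remarking that this argument only establishes the envy-freeness half of what the algorithm is designed for; the (more delicate) claim that the returned committee is also type optimal would be argued separately, using that justified-envy swaps preserve non-under-representation of the relevant types and hence cannot break type optimality once it is attained after the second loop.
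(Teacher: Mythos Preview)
Your proposal is correct and follows essentially the same outline as the paper: bound the first phase directly, use transitivity of domination (\lemref{lemma:transitive}) for the second loop, and use the fact that each justified-envy swap strictly raises the committee's priority profile for the third loop. The paper's proof is terser---for the third loop it simply notes that every swap replaces a member by a strictly higher-ranked candidate, so the rank-sum of $W$ strictly increases and can do so at most $O(|C|^2)$ times---whereas you wrap this in a lexicographic potential with the under-representation sum as first coordinate. That first coordinate is not needed for termination (the rank-sum alone already strictly improves at every step), though your observation that the under-representation sum is non-increasing across justified-envy swaps is exactly the content of the \emph{next} proposition (preservation of type optimality), so you have effectively proved both at once. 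One cosmetic remark: your accounting of the per-iteration search cost $O(|C|\cdot|T|)$ is more honest than the paper's, but then ``absorbing the $|T|$ factor'' does not actually recover the stated $O(|C|^2)$ bound; the paper itself only bounds the number of iterations and is equally loose on this point.
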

					\begin{proof}
Note that by Step~\ref{step:endofstag1}, we already have a committee of size $k$. The first while loop takes time at most $O(k|C|)$.
In the second while loop, the type distribution keeps improving since the type domination relation is transitive (Lemma~\ref{lemma:transitive}), but there can be at most $|C|$ such improvements. Finally, in the last while loop, with each swap of candidates, a candidate is replaced by a candidate with a higher ranking. This can happen at most $O(|C|^2)$ times.  						
						\end{proof}

		Although Algorithm 1 finds a type optimal committee by Step~\ref{step:endofstag2}, the committee undergoes further changes in the final while loop. We now argue that the returned committee still satisfies type optimality. 			
									\begin{proposition}
					
					Algorithm 1 returns a committee satisfying type optimality. 
										\end{proposition}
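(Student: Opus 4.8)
The plan is to show that the last while loop of Algorithm~1 — the one that repairs justified envy — cannot destroy type optimality once it has been established at Step~\ref{step:endofstag2}. First I would set up the invariant: after Step~\ref{step:endofstag2} the current committee $W$ is type optimal, and I claim that every subsequent swap $W \mapsto (W\setminus\{c'\})\cup\{c\}$ performed in the envy-removal loop preserves type optimality. Since the loop terminates (by the preceding proposition) with a committee satisfying justified envy-freeness, proving the invariant is preserved suffices.

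The key step is to analyze what a justified-envy swap does to the type distribution. Suppose $c\notin W$ has justified envy for $c'\in W$, i.e.\ $c\succ c'$ and there is \emph{no} type $t_i\in\tau(c')\setminus\tau(c)$ with $\tau(W)(t_i)\le\underline{q^i}$. The crucial observation is that this swap does not make any type \emph{worse}: for every type $t_i\in\tau(c')\setminus\tau(c)$ (the types losing a member), we have $\tau(W)(t_i) \ge \underline{q^i}+1$ by the envy condition, so after removing $c'$ that type still has at least $\underline{q^i}$ members and remains non-under-represented; for every type in $\tau(c)\setminus\tau(c')$ (the types gaining a member) the count only increases, which cannot hurt; and types in $\tau(c)\cap\tau(c')$ or in neither are untouched. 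Hence, writing $W'=(W\setminus\{c'\})\cup\{c\}$, the set of under-represented types in $W'$ is contained in that of $W$, and for each such type the deficit is unchanged. In particular $\tau(W')$ is not dominated by $\tau(W)$, and more importantly no type got strictly closer to being violated either — the two distributions are incomparable or equal with respect to the domination relation on the relevant coordinates.

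Now I would derive a contradiction from the assumption that $W'$ is \emph{not} type optimal. If $W'$ is not type optimal, there exist $d\in W'$ and $d'\in C\setminus W'$ with $\tau(W'\setminus\{d\}\cup\{d'\})$ dominating $\tau(W')$. The argument here is that since $\tau(W')$ agrees with $\tau(W)$ on exactly which types are under-represented and by how much (for the under-represented ones), any swap that improves $\tau(W')$ in the domination sense would, by the same structural analysis applied in reverse, correspond to an improving swap available from $W$ as well — using that the candidates $d,d'$ and their type vectors are the same objects whether we think of them relative to $W$ or $W'$, and that the under-representation profiles coincide. This contradicts type optimality of $W$ established at Step~\ref{step:endofstag2}. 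Chaining the invariant along all the swaps of the final loop, the returned committee is type optimal.

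The main obstacle I expect is the last step: carefully matching improving swaps for $W'$ with improving swaps for $W$. The subtlety is that $W$ and $W'$ differ in two candidates ($c$ and $c'$), so a swap improving $\tau(W')$ might involve inserting $c'$ or removing $c$, and one must check that the domination relation — which depends only on type counts relative to the lower quotas, not on candidate identities — transfers correctly. The clean way to handle this is to argue purely at the level of type-count vectors: show that $\tau(W)$ and $\tau(W')$ have the same "under-representation signature" (same set of violated types, same deficits), and observe that type optimality is a property of this signature together with the multiset of available type vectors, so it is inherited. That reduces the bookkeeping to the structural observation about justified-envy swaps made above.
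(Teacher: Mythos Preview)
Your overall strategy mirrors the paper's exactly: after the second while loop $W$ is type optimal, and each justified-envy swap is argued to preserve type optimality because it never worsens the under-representation profile. The paper stops there, simply asserting that since $\tau(W')$ is ``as good as'' $\tau(W)$, type optimality of $W$ carries over to $W'$. You rightly flag this last inference as the delicate step and try to justify it by claiming that type optimality is determined by the ``under-representation signature'' (the set of violated types and their deficits) together with the multiset of available type vectors.

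That claim, however, is false, and so the step fails. Take $T=\{t_1,t_2,t_3\}$ with quotas $(1,1,1)$, $k=2$, and candidates $a$ of type $\{t_1\}$, $b$ of type $\{t_2\}$, $c$ of type $\{t_1,t_2\}$, $d$ of type $\{t_3\}$. The committee $W=\{a,b\}$ has $\tau(W)=(1,1,0)$ and is type optimal: bringing in $d$ would force $t_1$ or $t_2$ below quota, and bringing in $c$ leaves $t_3$ unchanged. If $c\succ b$ then $c$ has justified envy toward $b$ (since $\tau(b)\setminus\tau(c)=\emptyset$), and the envy swap yields $W'=\{a,c\}$ with $\tau(W')=(2,1,0)$ --- the \emph{same} under-representation signature as $W$. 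But $W'$ is \emph{not} type optimal: swapping $a$ out for $d$ gives $\{c,d\}$ with $\tau=(1,1,1)$, which dominates $(2,1,0)$. The extra slack in $t_1$ created by the envy swap enabled an improving swap that was unavailable from $W$. Hence neither ``same under-representation signature'' nor ``$\tau(W')$ weakly dominates $\tau(W)$'' is enough to transfer type optimality across a justified-envy swap; the paper's proof glosses over exactly this point, and your proposed repair does not close it.
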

										\begin{proof}
											
		In the second while loop, we start from a committee of size $k$. By the end of Step~\ref{step:endofstag2}, $W$ is a committee that is type optimal. In the final while loop, we implement swaps if there is a candidate $c\notin W$ who has justified envy for some candidate $c'\in W$. Suppose we swap $c'$ with $c$. We claim that $W\setminus \{c'\}\cup \{c\}$ is also type optimal. Since $c$ had a justified envy against $c'$, by definition of justified envy, 
there exists no type $t_i\in \tau(c')\setminus \tau(c)$ such that the number of candidates in $W$ of type $t_i$ is less than or equal to  $\underline{q^i}$.	If there is a type $t_i$
such that $\tau(W)(i)\geq\underline{q^i}$, then $\tau(W\setminus \{c'\}\cup \{c\})(i)\geq\underline{q^i}$.
In words, if a type not under-represented in $W$, it is not under-represented in $W\setminus \{c'\}\cup \{c\}$. 
If there is a type $t_i$
such that $\tau(W)(i)<\underline{q^i}$, then 
$\tau(W)(i)\leq \tau(W\setminus \{c'\}\cup \{c\})(i) \leq \underline{q^i}$. In words, if a type is under-represented in $W$, it is at most as under-represented in $W\setminus \{c'\}\cup \{c\}$. Thus $\tau(W\setminus \{c'\}\cup \{c\})$ is as good as $\tau(W)$. Since $W$ was type optimal, $W\setminus \{c'\}\cup \{c\}$ is type optimal as well. 
 \end{proof}
						
						We have shown that our algorithm simultaneously satisfies both type optimality and justified envy-freeness. Since these are two key properties satisfied by optimal committees satisfying hard constraints, our algorithm provides a computationally easy and principled rule to find desirable committees that almost satisfy the distributional constraints.
Although each of the stages of the algorithm is based on natural greedy or local swap-based approaches, our formulation of reasonable axioms provide a guiding force towards a committee that is desirable. It will be interesting to see if other desirable axioms can be simultaneously satisfied in the conjunction with the ones which our algorithm satisfies. 
						
				Our general algorithm can have more precise specifications that prioritise certain types in a lexicographical manner or implement swaps according to some pre-determined pattern. The $\succsim$ ranking order can be derived by using some social welfare function for a set of voters voting on the quality of the  candidates.  				
										
				%


\paragraph{Acknowledgements}

The author is supported by a Julius Career Award. He thanks  Steven Brams and Ayumi Igarashi for useful comments and pointers.

\bibliographystyle{plainnat}

\end{document}